\newtheorem{theorem}{Theorem}
\newcommand{\argmin}{\operatornamewithlimits{argmin}}
\begin{document}
%\begin{frontmatter}
% title
\title{Reservoir computing for spatiotemporal signal classification without trained output weights \thanks{This work was cleared for public release by Wright Patterson Air Force Base Public Affairs on 11 Apr 2016.  Case Number: 88ABW-2016-1812.}}

% affiliation
\author{Ashley Prater \\Air Force Research Laboratory, Information Directorate, Rome NY USA \thanks{ashley.prater.3@us.af.mil}}

\maketitle

% abstract
\begin{abstract}
Reservoir computing is a recently introduced machine learning paradigm that has been shown to be well-suited for the processing of spatiotemporal data.  Rather than training the network node connections and weights via backpropagation in traditional recurrent neural networks, reservoirs instead have fixed connections and weights among the `hidden layer' nodes, and traditionally only the weights to the output layer of neurons are trained using linear regression.  We claim that for signal classification tasks one may forgo the weight training step entirely and instead use a simple supervised clustering method based upon principal components of norms of reservoir states.  The proposed method is mathematically analyzed and explored through numerical experiments on real-world data.  The examples demonstrate that the proposed may outperform the traditional trained output weight approach in terms of classification accuracy and sensitivity to reservoir parameters.
\end{abstract}

%\begin{keyword} Reservoir computing \sep echo state network \sep time delay reservoir \sep signal classification \sep clustering \sep supervised learning
%\end{keyword}
%\end{frontmatter}

\thispagestyle{empty}

\section{Introduction}
Reservoir computing is a recently developed bio-inspired machine learning paradigm for the processing of spatiotemporal data~\cite{jaeger, maasrealtime}.  In the language of neural networks, a reservoir is collection of hidden layer nodes with nonlinear recurrent dynamics, where the nodes are sparsely connected with fixed weights that are not trained to fit specific data.  Because the weights are fixed, using a reservoir requires only a simple initialization step, as opposed to more traditional recurrent neural networks whose weights and connections must be learned in a tedious backpropagation training step~\cite{werbos}.  The property of fixing the reservoir connections and weights has many benefits, including ease of initialization, along with having the ability to quickly adapt to new data and applications.

Reservoirs, like all recurrent neural networks, are based on the premise that the state of the reservoir at a particular time should depend on the current value of the input signal, along with recent inputs and reservoir states.  To be an effective method for computation, a reservoir should map input data into a sufficiently high-dimensional space.  It is desirable for a reservoir to operate `at the edge of chaos'~\cite{bertschinger}, so dissimilar inputs are sufficiently separated in the reservoir node states, yet inputs with only small perturbation-like differences do not stray too far apart.  Reservoir dynamics demonstrate long short-term memory~\cite{jaeger_lstm}, so any individual point-wise errors in a signal will not corrupt the entire reservoir response.  

Two types of reservoirs that have emerged in literature include echo state networks (ESNs) and time-delay reservoirs (TDRs).  An ESN uses randomly, yet sparsely, connected nodes with randomly assigned fixed weights~\cite{jaeger, maasrealtime, goodman}.  A TDR uses a cyclic topology, where each node provides data to exactly one other node, and has fixed, non-random weights~\cite{grigoryeva, ortin, paquotopto}.  
The output layer of both ESN and TDR-type reservoirs traditionally use linear output weights, trained on a labeled dataset using least squares or ridge regression\cite{jaeger, goudarzi, luko}.  This method has an easy training phase, and is computationally cheap to use in the testing phase.  However, it can be sensitive to reservoir parameters and dataset characteristics and prone to overfitting.  If the training dataset has large intra-class variation, or if the classes are not well-separated, then it may be difficult to find a collection of weights to discriminate the classes well.  

In this research, a simple supervised clustering method based on principal components is proposed for use in classification tasks using ESNs and TDRs.    The method used is based upon comparing the norm of a reservoir response of a test signal against the principal components of the norms of reservoir states for classes of labeled training data.  The clustering method has slightly higher computational complexity than using trained output weights to classify new input signals, however it may achieve higher classification accuracy while being less sensitive to reservoir type, size, and feedback strength.  We present a rigorous analysis of the clustering method, including two theoreoms characterizing the upper bound of the difference in reservoir responses for two input signals, with the upper bound in terms of the input signals, the reservoir type, and the user-generated parameters.  Moreover, we explore the difference in performance of the two methods through numerical simulations performed using a real-world dataset for both ESNs and TDRs for various reservoir parameters.  In every simulation, the clustering approach outperforms the trained output weights in terms of both accuracy and CPU time required to classify test signals.

The following notation is used in this work.  For a collection of signals $\{u\}$, the $j^\text{th}$ element in the collection is denoted by $u^{(j)}$.  Training sets are partitioned into $K$ classes.  Let $\mathcal{C}_k$ be the collection of indices of signals in the $k^\text{th}$ class.  That is, $u^{(j)}$ is in the $k^\text{th}$ class iff $j\in \mathcal{C}_k$.   For a vector $v$, the $\ell_2$ norm is given by $\|v\|_2 = (\sum_j v^2_j)^{1/2}$.  For a matrix $A$, $\rho(A)$ is the spectral radius, i.e.\ the largest absolute value of an eigenvalue of $A$.  We use $\mathcal{O}(\cdot)$ with the standard `big O' meaning, that $f(x) = \mathcal{O}(g(x))$ if there exists $M>0$ and $x_0\in\mathbb{R}$ such that $|f(x)|\leq M|g(x)|$ for all $x\geq x_0$.

%%%%%%%%%%%%%%%%%%%%%%%%%%%%%%%%%%%%%%
\section{Reservoir Computing Models for Classification}

Suppose $u\in\mathbb{R}^T$ is an input signal of length $T$, possibly after the application of a multiplexing mask, and say $u(t)$ is the value of $u$ at time $t$. The values of the $N$ reservoir nodes at time $t$ are called the \emph{reservoir states} and are denoted by the vectors $X(t)\in\mathbb{R}^N$, one vector for each $t$. The $n^\text{th}$ entry of these vectors, $X_n(t)$ denote the state of the $n^\text{th}$ reservoir node at time $t$.  The dynamics of the ESN and TDR architectures are described by the following models:
\begin{align}
	\label{eq:1}&\text{ESN:   }   X(t) = f( W_\text{in} u(t) + W_\text{res} X(t-1)) \\
	\label{eq:2}&\text{TDR:   }   X_n(t) = \begin{cases} f(\alpha u(t) + \beta X_{N-1}(t-1)), \; &\text{ if } n = 0\\
		X_{n-1}(t-1), &\text{ if } n \in \{1,2,\ldots,N-1\}
	\end{cases}
\end{align}

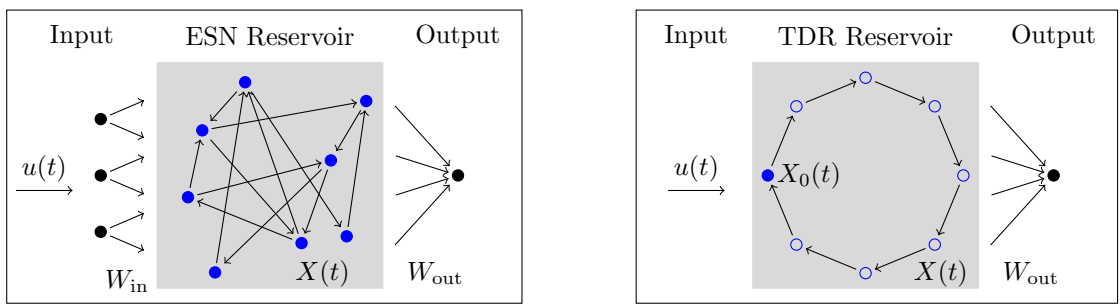
\begin{figure}[bthp]
\centering
\begin{tikzpicture}

	% Input
	\node at (-1,3.35) {Input};
	\node (In1) at (-0.75,0.75) {};
	\node (In2) at (-0.75,1.5) {};
	\node (In3) at (-0.75,2.25) {};

	\node (In1b) at (-.05,.45) {};
	\node (In1t) at (-.05,1.05) {};
	\node (In2b) at (-.05,1.2) {};
	\node (In2t) at (-.05,1.8) {};
	\node (In3b) at (-.05,1.95) {};
	\node (In3t) at (-.05,2.55) {}; 

	\draw [black,fill] (In1) circle (0.075);
	\draw [black,fill] (In2) circle (0.075);
	\draw [black,fill] (In3) circle (0.075);

	\draw [->] (In1) edge (In1b);
	\draw [->] (In1) edge (In1t);
	\draw [->] (In2) edge (In2b);
	\draw [->] (In2) edge (In2t);
	\draw [->] (In3) edge (In3b);
	\draw [->] (In3) edge (In3t);
	
	\node (u) at (-1.5,1.6) {$u(t)$};
	\node (ul) at (-2,1.3) {};
	\node (ur) at (-1,1.3) {};
	\draw [->] (ul) edge (ur);

	\node at (-0.4,0.1) {$W_\text{in}$};

	% Output
	\node at (4,3.35) {Output};
	\node (O1) at (4,1.5) {};

	\node (O1b) at (3.05,.45) {};
	\node (O1t) at (3.05,1.05) {};
	\node (O2b) at (3.05,1.2) {};
	\node (O2t) at (3.05,1.8) {};
	\node (O3b) at (3.05,1.95) {};
	\node (O3t) at (3.05,2.55) {}; 

	\draw [black,fill] (O1) circle (0.075);

	\draw [->] (O1b) edge (O1);
	\draw [->] (O2b) edge (O1);
	\draw [->] (O2t) edge (O1);
	\draw [->] (O3t) edge (O1);

	\node at (3.7,0.2) {$W_\text{out}$};

	\draw [black, thin] (-2,-.2) rectangle (4.85,3.7);

	% Reservoir nodes
	\draw [gray!30,fill] (0,0) rectangle (3,3);
	\node at (1.5,3.35) {ESN Reservoir};

	\node (A) at (0.41, 1.21) {};
	\node (B) at (0.60, 2.10) {};
	\node (C) at (1.17, 2.74) {};
	\node (D) at (1.92, 0.60) {};
	\node (E) at (0.77, 0.21) {};
	\node (F) at (2.31, 1.70) {};
	\node (G) at (2.52, 0.69) {};
	\node (H) at (2.78, 2.49) {};

	\node at (2.2,.2) {$X(t)$};

	%	Draw circles at nodes
	\draw [blue, fill] (A) circle (0.075);
	\draw [blue, fill] (B) circle (0.075);
	\draw [blue, fill] (C) circle (0.075);
	\draw [blue, fill] (D) circle (0.075);
	\draw [blue, fill] (E) circle (0.075);
	\draw [blue, fill] (F) circle (0.075);
	\draw [blue, fill] (G) circle (0.075);
	\draw [blue, fill] (H) circle (0.075);

	% Connections among nodes
	% 1-2 and 1-6
	\draw [->] (A) edge (B);
	\draw [->] (A) edge (F);
	\draw [->] (B) edge (D);
	\draw [->] (B) edge (H);
	\draw [->] (C) edge (B);
	\draw [->] (C) edge (G);
	\draw [->] (D) edge (A);
	\draw [->] (D) edge (C);	
	\draw [->] (E) edge (C);
	\draw [->] (F) edge (D);
	\draw [->] (F) edge (E);
	\draw [->] (G) edge (H);
	\draw [->] (H) edge (F);
\end{tikzpicture}
\hspace{0.5 in}
\begin{tikzpicture}
	% Input
	\node at (-0.75,3.35) {Input};
	
	\node (u) at (-0.75,1.6) {$u(t)$};
	\node (ul) at (-1.25,1.3) {};
	\node (ur) at (-0.25,1.3) {};
	\draw [->] (ul) edge (ur);

	% Output
	\node at (4,3.35) {Output};
	\node (O1) at (4,1.5) {};

	\node (O1b) at (3.05,.45) {};
	\node (O1t) at (3.05,1.05) {};
	\node (O2b) at (3.05,1.2) {};
	\node (O2t) at (3.05,1.8) {};
	\node (O3b) at (3.05,1.95) {};
	\node (O3t) at (3.05,2.55) {}; 

	\draw [black,fill] (O1) circle (0.075);

	\draw [->] (O1b) edge (O1);
	\draw [->] (O2b) edge (O1);
	\draw [->] (O2t) edge (O1);
	\draw [->] (O3t) edge (O1);

	\node at (3.7,0.2) {$W_\text{out}$};

	% TDR Reservoir nodes
	\draw [gray!30,fill] (0,0) rectangle (3,3);
	\node at (1.5,3.35) {TDR Reservoir};

	\node (A) at (2.80, 1.50) {};
	\node (B) at (2.42, 2.42) {};
	\node (C) at (1.50, 2.80) {};
	\node (D) at (0.58, 2.42) {};
	\node (E) at (0.20, 1.50) {};
	\node (F) at (0.58, 0.58) {};
	\node (G) at (1.50, 0.20) {};
	\node (H) at (2.42, 0.58) {};

	%	Draw circles at nodes
	\draw [blue] (A) circle (0.075);
	\draw [blue] (B) circle (0.075);
	\draw [blue] (C) circle (0.075);
	\draw [blue] (D) circle (0.075);
	\draw [blue,fill] (E) circle (0.075);
	\draw [blue] (F) circle (0.075);
	\draw [blue] (G) circle (0.075);
	\draw [blue] (H) circle (0.075);
	\node at (2.5,.2) {$X(t)$};
	\draw (E) node [right] {$X_0(t)$};

	% Connections among nodes
	\draw [->] (A) edge (H);
	\draw [->] (B) edge (A);
	\draw [->] (C) edge (B);
	\draw [->] (D) edge (C);
	\draw [->] (E) edge (D);
	\draw [->] (F) edge (E);
	\draw [->] (G) edge (F);
	\draw [->] (H) edge (G);

	\draw [black, thin] (-1.55,-.2) rectangle (4.85,3.7);

\end{tikzpicture}
%\includegraphics[scale=1]{Figure1.jpg}
%\end{figure}
%\begin{figure}[bthp]
%\centering
%\includegraphics[scale=1]{Figure1.jpg}
\caption{Representations of two architectural variants of reservoirs with output weights, the echo state network (left) and time delay reservoir (right).}
\label{fig:Figure1}
\end{figure}

For ease of notation, suppose each reservoir type has $N$ nodes.  In the ESN topology, the vector $W_\text{in}\in\mathbb{R}^N$ weights the input signal feeding into the nodes, while the matrix $W_\text{res}\in\mathbb{R}^{N\times N}$ determines the fixed connections and weights among the nodes.  That is, node $m$ feeds into node $n$ weighted by the $(n,m)^\text{th}$ entry of $W_\text{res}$ in the ESN model.  The TDR has $N-1$ virtual nodes, corresponding to $n=1,2,\ldots,N-1$, and one physical node for $n=0$.  The parameter $\alpha$ is the input gain, and $\beta$ is the attenuation value.  Notice in the TDR the node values are simply passed along the reservoir unchanged except at the physical node.  Models of the ESN and TDR reservoirs are shown in Figure~\ref{fig:Figure1}.

The function $f$ in Equations~\eqref{eq:1} and~\eqref{eq:2} is a nonlinear activation function.  Typical choices for $f$ include sinusoidal, logistic, sigmoidal, and piecewise linear functions.

Time-multiplexing the inputs is a common preprocessing step in time-delay reservoir systems~\cite{grigoryeva, ortin, paquotopto,appletantthesis, appletantoptimized}.  The multiplexing mask is applied as follows.  Suppose the raw inputs are $z_1, z_2, z_3,\ldots$, and consider a mask $m$ of length $L$.  Then the multiplexed input $u$ is defined via $u((k-1)L+1:kL) = z_km$, that is each raw input $z_k$ is multiplied by the vector $m$ and concatenated to form the multiplexed input.  The purpose of the multiplexing mask in TDRs is several-fold.  A non-constant mask helps to increase the dimensionality of the reservoir, yielding richer dynamics~\cite{paquotopto,appletantoptimized}.  Furthermore, since the inputs are all passed to the reservoir only via the head node, the mask allows several virtual nodes to process values from a single raw input vector at once, as random ESNs do by design~\cite{appletantthesis, appletantoptimized}.    An additional benefit is that it helps to `slow down' TDRs, many of which are physically implemented as optical devices that would process the raw data much faster than one can sample the outputs~\cite{ortin, paquotopto, duport}.

Two approaches for interpreting the reservoir outputs for supervised classification tasks are described in the subsections below.  The first describes the traditional approach using trained output weights, and the second describes a method of clustering the reservoir node states.  %For both approaches, suppose $\mathrm{Tr}$ is a collection of training indices.  

%%%%%%%%%%%%%%%%%%%%%%%%%%%%%%%%%%%%%%%%%%%%%%%%%%%
\subsection{Trained linear output weights}
A classical method to interpret the results of a reservoir is to train a collection of output weight matrices $W_\text{out}(t) \in \mathbb{R}^{K\times N}$ at each time $t$ of interest~\cite{jaeger, goudarzi, luko, luko2} that map reservoir states close to an appropriate `indicator' vector.  That is, choose a collection of times of interest $\Omega \subseteq \{1,2,\ldots,T\}$, and let $X^{(j)}(t)\in\mathbb{R}^M$ denote the reservoir nodes at time $t$ driven by the $j^\text{th}$ element in the training dataset using either Equation~\eqref{eq:1} or~\eqref{eq:2}.   Then the collection of output weights at time $t$ is
\begin{equation}\label{eq:Wout t}
W_\text{out}(t) = \argmin_{W\in\mathbb{R}^{K\times N}} \left\{ \sum_{j\in\mathrm{Tr}} \|d_j - WX^{(j)}(t) \|_2^2 + \lambda\|W\|_2^2 \right\},
\end{equation}
where each $d_j(t)\in\mathbb{R}^K$ is an indicator vector, having all zeros entries except for a $1$ in the $k^\text{th}$ position if $j\in\mathcal{C}_k$.

Equipped with the collection of output weights, a test pattern with reservoir node states $X(t) \in \mathbb{R}^{N}$ is determined to belong to the $k^\text{th}$ class if the $K$-vector 
\begin{equation}\label{eq:D test merged}
	D = \sum_{t\in\Omega} \omega_t W_\text{out}(t)X(t)
\end{equation}
has maximal element in the $k^\text{th}$ row.  Typically the classification weights $\omega_t$ are chosen to be 1.

%%%%%%%%%%%%%%%%%%%%%%%%%%%%%%%%%%%%%%%%%%%%%%%%%%%%%%
\begin{algorithm}[htb!]\caption{(To classify a signal using trained linear output weights)}\label{alg:Wout t}
 \begin{algorithmic}[htb!]
	\State \textbf{Initialization:} 
	Input the fixed parameters $\lambda, \Omega$. 

	\State \textbf{Training:} Generate the vectors $X^{(j)}(t)\in\mathbb{R}^N$ for each $j$ and $t\in\Omega$, using Equation~\eqref{eq:1} or~\eqref{eq:2}, then find the collections $\{W_\text{out}(t): t\in\Omega\}$ as in Equation~\eqref{eq:Wout t}.

	\State \textbf{Testing:} Let $u\in\mathbb{R}^T$ be a new test pattern.
	\begin{enumerate}
		\item Compute the corresponding reservoir nodes $\{X(t) \in\mathbb{R}^N: t\in\Omega\}$ using Equation~\eqref{eq:1} or~\eqref{eq:2}.  
		\item Compute the vector $D$ as in Equation~\eqref{eq:D test merged}.
		\item Say $u$ is in the $k^\text{th}$ class if $D(k) \geq D(\ell)$ for all indices $\ell \in \{1:K\}$.
	\end{enumerate}
 \end{algorithmic}
\end{algorithm}
%%%%%%%%%%%%%%%%%%%%%%%%%%%%%%%%%%%%%%%%%%%%%%%%%%%%%%

The computational cost of determining the class of a new pattern using the the `Testing' phase of Algorithm~\ref{alg:Wout t} is determined as follows.  Assume that the matrices $W_\text{out}(t)$ are given, and do not include its derivation in the cost evaluation.   To drive the reservoir and find the nodes $X(t)$ of a new test pattern requires $\mathcal{O}(N^2T)$ multiplications using the ESN dynamics in Equation~\eqref{eq:1}, or $\mathcal{O}(T)$ multiplications using the TDR dynamics in Equation~\eqref{eq:2}.  Although only the reservoir node values at times $t\in\Omega$ are of interest, one must drive the reservoir using the full set of times.  To find the vector $D$ requires $\mathcal{O}(KN|\Omega|)$ multiplications, and finally $\mathcal{O}(K)$ comparisons are needed to determine the maximal element.  Overall, this leads to a complexity of $\mathcal{O}(N^2T + KN|\Omega|)$ when using ESN-type reservoirs and a complexity of $\mathcal{O}(T + KN|\Omega|)$ when using TDR-type reservoirs.

%%%%%%%%%%%%%%%%%%%%%%%%%%%%%%%%%%%%%%%%%%%%%%%%%%%%%%%%
\subsection{Classification via Clustering with Principal Components}

The underlying idea for the training method~\eqref{eq:Wout t} is that similar inputs to the reservoir produce similar outputs, even after the non-linear high-dimensional processing is applied.  Under this assumption, it is feasible that one could classify data using a clustering method without the use of the trained output weights.  Therefore, we propose the following method using the principal components of norms of reservoir responses to perform classification.  Let $\mathcal{C}_k$ be the collection of indices of training patterns that belong to the $k^\text{th}$ class.  Find the reservoir states $X^{(j)}(t)\in\mathbb{R}^N$ for all $j\in\mathcal{C}_k$, $t\in\Omega$, and $k$, and compute the vectors $b_j\in\mathbb{R}^{|\Omega|}$, where 
\begin{equation}\label{eq:b}
	b_j(i) = \left\| X^{(j)}(t_i) \right\|_2^2, \quad t_i\in\Omega.
\end{equation}
For each class, concatenate the vectors $b_j$ to form matrices $B_k \in \mathbb{R}^{|\Omega|\times |\mathcal{C}_k|}$.  Since the input training patterns belong to the same class, the columns of each $D_k$ should exhibit similar characteristics.  Suppose $U_k\in\mathbb{R}^{|\Omega|\times R}$ is the matrix of the first $R$ principal components of $B_k$.  For any new test patterns with corresponding vector $b$, one can say that the pattern belongs to the $k^\text{th}$ class if $U_k$ describes $b$ well, i.e.\ if 
\begin{equation*}%\label{eq:projection}
	\left\| \left( I - U_kU_k^*\right)b \right\| \leq \left\| \left( I - U_\ell U_\ell^*\right)b\right\|, \quad \forall \ell.
\end{equation*}

%%%%%%%%%%%%%%%%%%%%%%%%%%%%%%%%%%%%%%%%%%%%%%%%
\begin{algorithm}[htb!]\caption{(To classify a signal using clustering via principal components)}\label{alg:cluster}
\begin{algorithmic}[htb!]
	\State \textbf{Initialization}: Input the collection of times $\Omega$ and the number of principal components to consider $R$.

	\State \textbf{Training}: 
	\begin{enumerate}
		\item Generate the vectors $X^{(j)}(t)\in\mathbb{R}^N$ for all $j\in\mathcal{C}_k$, $t\in\Omega$, and each $k$ using either Equation~\eqref{eq:1} or~\eqref{eq:2}.
		\item Compute the matrices $B_k$ with columns as in Equation~\eqref{eq:b}.
		\item Compute $U_k$, the first $R$ principal components of $B_k$.
	\end{enumerate}

	\State \textbf{Testing}: Let $u \in\mathbb{R}^T$ be a new test pattern.
	\begin{enumerate} 
		\item Generate the reservoir states $X(t)\in\mathbb{R}^N$ for each $t$ according to Equation~\eqref{eq:1} or~\eqref{eq:2}.
		\item Compute the vector $b\in\mathbb{R}^{|\Omega|}$ according to Equation~\eqref{eq:b}.
		\item For each $k$, compute $d_k = \|(I_{|\Omega|} - U_kU_k^*)b\|_2^2$.
		\item Say $u$ belongs to the $k^\text{th}$ class if $d_k\leq k_\ell$ for all $\ell$.
	\end{enumerate}
\end{algorithmic}
\end{algorithm}

The computational cost of determining the class of a new pattern using the `Testing' phase of Algorithm~\ref{alg:cluster} is determined as follows.  Assume that the matrices $I - U_kU_k^* \in \mathbb{R}^{|\Omega|\times |\Omega|}$ are precomputed during the training phase.  As in Algorithm~\ref{alg:Wout t}, the cost to drive the reservoir and find the nodes $X(t)$ requires $\mathcal{O}(N^2T)$ multiplications using the ESN, or $\mathcal{O}(T)$ multiplications using the TDR.  To compute the vector $b$ in Step~2 requires $\mathcal{O}(N|\Omega|)$ multiplications, and to compute the values $\{d_k\}$ in Step~3 requires $\mathcal{O}(K|\Omega|^2)$ multiplications.  Finally, to determine the class of $u$ in Step~4 requires $K$ comparisons.  Overall, this leads to a complexity of $\mathcal{O}(N^2T + N|\Omega| + K|\Omega|^2)$ for ESN-type reservoirs, and a complexity of $\mathcal{O}(T + N|\Omega| + K|\Omega|^2)$ for TDR-type reservoirs.  Since the parameters can vary in magnitude, the dominant term in the complexity depends on the particular set-up used.

%%%%%%%%%%%%%%%%%%%%%%%%%%%%%%%%%%%%%%%%%%%%%%%%%%%%%%
\section{Analysis of Reservoir Behavior}
The clustering method proposed in Algorithm~\ref{alg:cluster} will be more accurate if small variations in the input signals lead to bounded differences in reservoir states, while large discrepancies in inputs are mapped farther apart.  To confidently use this approach, we must characterize reservoir responses for similar inputs.

Several studies of reservoir performance based on the type of reservoir architecture, chosen parameters, as well as the characteristics of the input data have been performed, with evidence that some combinations of the aforementioned factors can seriously degrade performance~\cite{maasrealtime, grigoryeva, goudarzi, paquotdambre}.  However, the metrics used in the reservoir computing literature tend to be only experimentally investigated.   To explore how well the reservoir response separates classes, the separation ratio~\cite{goodman, gibbons}, point-wise separation~\cite{maasrealtime, schrauwen2007}, and class separation~\cite{chrol-cannon} have been used.  These all measure how well a reservoir can separate inputs from distinct classes, by having distances between disparate classes large while keeping similar inputs close.  Similarly, to measure how effectively a reservoir can process a particular dataset, researchers use the universal approximation property~\cite{maasrealtime} kernel quality~\cite{chrol-cannon, leg, soriano}, reservoir capacity~\cite{dambre}, and the Echo State Property~\cite{jaeger}.  These measures and properties concern the representation of inputs within the reservoir response and the reconstructability of an input signal from reservoir states.  For robustness to noise, generalization rank~\cite{soriano} or the Lyapunov coefficient~\cite{gibbons, chrol-cannon, leg, verstraeten, schrauwen2008, nils} are considered.

Although the reservoir dynamics~\eqref{eq:1} and~\eqref{eq:2} have simple descriptions, rigorous treatment of their behavior have proven difficult, with few results so far.  In Proposition~3 of~\cite{jaeger}, the distance between two reservoir states at a given time is bounded in terms of the reservoir states at the previous timestep and the spectral radius of the reservoir weights.  Although mathematically proven, this Proposition covers only randomly connected ESNs incrementing one timestep with activation functions of the form $f(x) = \mathrm{tanh}(x)$.   Theorem~3.5 of~\cite{dimarco} bounds the distance between two output vectors of a TDR, determined using linear read-out weights, in terms of the reservoir parameters and the behavior of the input signals.   In the Theorems below, we prove upper bounds for distances between two reservoir responses to different inputs in terms of reservoir parameters and the behavior of the inputs for both ESNs and TDRs, and in more generality than the results given in \cite{jaeger} and~\cite{dimarco}.

For readability, let us first introduce some notation.  Let $u^{(j)}(t)$ denote the $j^\text{th}$ input at time $t$, with corresponding reservoir states $X^{(j)}(t)$.  Let $\delta_{i,j,t} = |u^{(i)}(t)-u^{(j)}(t)|$ be the difference between two input signals at time $t$, and let $\varepsilon_{i,j,t} = \|x^{(i)}(t)-x^{(j)}(t)\|$ be the distance between the corresponding node states at time $t$.  Suppose $\overline{\delta}_{i,j} = \sup \{ \delta_{i,j,t} : t \in \mathbb{R}\}$ is bounded for each pair $(i,j)$, and that the nonlinear activation function $f$ is Lipschitz continuous with optimal Lipschitz constant $L$.  Finally, let $[\cdot]_n$ denote a vector whose entries run over the range of the variable $n$.

%	Clustering Theorem for ESN dynamics
\begin{theorem}\label{th:1}
Suppose the reservoir node states are determined using the ESN dynamics from Equation~\eqref{eq:1}.  If $\rho(W_\text{res})$ is the spectral radius of $W_\text{res}$, then the distance between the reservoir nodes at time $t$ corresponding to two input signals $u^{(i)}$ and $u^{(j)}$ satisfies
\begin{equation*} \varepsilon_{i,j,t} \leq L \overline{\delta}_{i,j} \|W_\text{in}\|  \frac{1-(L\rho(W_\text{res}))^{t+1}}{1-L\rho(W_\text{res})}. 
\end{equation*}
\end{theorem}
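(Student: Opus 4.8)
The plan is to derive a one-step recursion for $\varepsilon_{i,j,t}$ directly from the ESN update~\eqref{eq:1} and then unroll it as a geometric series. First I would subtract the two state equations,
\begin{equation*}
X^{(i)}(t) - X^{(j)}(t) = f\bigl(W_\text{in}u^{(i)}(t) + W_\text{res}X^{(i)}(t-1)\bigr) - f\bigl(W_\text{in}u^{(j)}(t) + W_\text{res}X^{(j)}(t-1)\bigr),
\end{equation*}
apply the Lipschitz bound on $f$ with optimal constant $L$, and then use the triangle inequality to split the argument into an input-driven part and a state-driven part:
\begin{equation*}
\varepsilon_{i,j,t} \leq L\,\bigl\|W_\text{in}\bigl(u^{(i)}(t)-u^{(j)}(t)\bigr)\bigr\| + L\,\bigl\|W_\text{res}\bigl(X^{(i)}(t-1)-X^{(j)}(t-1)\bigr)\bigr\|.
\end{equation*}

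Next I would bound the two terms separately. Since $u(t)$ is a scalar, the first term equals $L\,\delta_{i,j,t}\|W_\text{in}\|$, which is at most $L\,\overline{\delta}_{i,j}\|W_\text{in}\|$ by the definition of $\overline{\delta}_{i,j}$ as the supremum of $\delta_{i,j,t}$ over $t$. For the second term I would pass to the spectral radius via $\|W_\text{res}v\| \leq \rho(W_\text{res})\,\|v\|$, so that it is bounded by $L\,\rho(W_\text{res})\,\varepsilon_{i,j,t-1}$. Writing $a = L\,\overline{\delta}_{i,j}\|W_\text{in}\|$ and $r = L\,\rho(W_\text{res})$, this produces the affine recursion $\varepsilon_{i,j,t} \leq a + r\,\varepsilon_{i,j,t-1}$. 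Assuming both signals drive the reservoir from a common initial state, the state difference vanishes before the first input is presented, so the base case gives $\varepsilon_{i,j,0} \leq a$ and an easy induction yields $\varepsilon_{i,j,t} \leq a\sum_{m=0}^{t} r^m = a\,\frac{1-r^{t+1}}{1-r}$. The inductive step substitutes the hypothesis into the recursion and collects the geometric sum, and the $t+1$ terms account for inputs at times $0,1,\dots,t$, reproducing the exponent $t+1$ in the claimed bound.

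I expect the only genuinely delicate point to be the step $\|W_\text{res}v\| \leq \rho(W_\text{res})\,\|v\|$ used for the state-driven term. In the Euclidean norm this inequality holds outright only when $W_\text{res}$ is normal (for instance symmetric); for a general matrix one has only $\rho(W_\text{res}) \leq \|W_\text{res}\|_{\text{op}}$. Making the clean geometric bound rigorous therefore requires either restricting to normal reservoir weights, replacing $\rho(W_\text{res})$ by the operator norm, or invoking a $W_\text{res}$-adapted vector norm in which the induced norm is within $\varepsilon$ of the spectral radius (Gelfand's formula). Since the spectral radius is the canonical tuning parameter in the reservoir literature, the statement most likely intends one of these conventions; identifying it is the main obstacle, whereas the recursion and the telescoping of the geometric series are entirely routine.
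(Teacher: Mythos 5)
Your proposal is correct and follows essentially the same route as the paper: subtract the state updates, apply the Lipschitz bound, split into an input-driven term bounded by $L\overline{\delta}_{i,j}\|W_\text{in}\|$ and a state-driven term bounded by $L\rho(W_\text{res})\varepsilon_{i,j,t-1}$, and unroll the resulting affine recursion into a geometric sum (the paper starts the induction from $\varepsilon_{i,j,-1}=0$ rather than $\varepsilon_{i,j,0}\leq a$, which is immaterial). The delicate point you flag is real but is present in the paper's own proof as well: the step $\|W_\text{res}v\|\leq\rho(W_\text{res})\|v\|$ is used there without justification, and it genuinely requires $W_\text{res}$ to be normal, or the replacement of $\rho(W_\text{res})$ by the operator norm or a $W_\text{res}$-adapted norm, exactly as you describe.
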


\begin{proof}
By Equation~\eqref{eq:1} and the Lipschitz continuity of $f$,
\begin{align*}
	\varepsilon_{i,j,t} &= \left\| X^{(i)}(t) - X^{(j)}(t) \right\| \\
	&= \left\| f\left(W_\text{in} u^{(i)}(t) + W_\text{res}X^{(i)}(t-1)\right) - f\left(W_\text{in} u^{(j)}(t) + W_\text{res}X^{(j)}(t-1)\right) \right\| \\
	&\leq L \left\| W_\text{in} \left[u^{(i)}(t) - u^{(j)}(t) \right] + W_\text{res}\left[ X^{(i)}(t-1)-X^{(j)}(t-1)\right] \right\| \\
	&\leq L\left\| W_\text{in} \right\| \delta_{i,j,t} + L \rho(W_\text{res}) \varepsilon_{i,j,t-1}.
\end{align*}
Since $\varepsilon_{i,j,-1} = 0$, it follows by induction that 
\[ \varepsilon_{i,j,t}\leq L\left\|W_\text{in}\right\| \sum_{r=0}^t (L\rho(W_\text{res})^r \delta_{i,j,t-r} \leq L\overline{\delta}_{i,j} \left\|W_\text{in}\right\|\frac{1 - (L\rho(W_\text{res}))^{t+1}}{1 - L\rho(W_\text{res})}. \qedhere \] 
\end{proof}

%	Clustering Theorem for TDR dynamics
\begin{theorem}\label{th:2}
Suppose the reservoir node states are determined using the TDR dynamics from Equation~\eqref{eq:2}. Then the distance between the reservoir nodes corresponding to two input signals $u^{(i)}$ and $u^{(j)}$ satisfies
\begin{equation*} \varepsilon_{i,j,t} \leq \alpha \overline{\delta}_{i,j} L \sqrt{N}\;  \frac{ 1 - \left(\beta L\right)^{\lfloor t/N\rfloor + 1}}{1 - \beta L}. \end{equation*}
\end{theorem}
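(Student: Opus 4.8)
The plan is to exploit the special delay structure of the TDR dynamics in Equation~\eqref{eq:2} to collapse the vector recurrence to a scalar recurrence on the head node, and then mirror the induction used in the proof of Theorem~\ref{th:1}. The crucial observation is that every non-head node simply passes its predecessor's value forward one timestep, so unrolling the second branch of Equation~\eqref{eq:2} gives $X_n(t) = X_{n-1}(t-1) = \cdots = X_0(t-n)$ for each $n\in\{0,1,\ldots,N-1\}$. In particular $X_{N-1}(t-1) = X_0(t-N)$, so the head node obeys the self-contained recurrence $X_0(t) = f(\alpha u(t) + \beta X_0(t-N))$, and the full state vector at time $t$ is just the window of head-node values $(X_0(t),X_0(t-1),\ldots,X_0(t-N+1))$.

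First I would set $e(s) = |X_0^{(i)}(s) - X_0^{(j)}(s)|$ for the scalar difference at the head node. Because each coordinate of $X^{(i)}(t) - X^{(j)}(t)$ equals $X_0^{(i)}(t-n) - X_0^{(j)}(t-n)$, the $\ell_2$ norm splits as $\varepsilon_{i,j,t}^2 = \sum_{n=0}^{N-1} e(t-n)^2$. Applying the Lipschitz bound to the head-node recurrence, exactly as in Theorem~\ref{th:1} but with $\|W_\text{in}\|$ replaced by $\alpha$ and $\rho(W_\text{res})$ replaced by $\beta$, yields $e(t) \leq L\alpha\,\delta_{i,j,t} + L\beta\,e(t-N)$.

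Next I would solve this scalar recurrence by induction on $t$, using the zero initial condition $e(s)=0$ for $s<0$. Since each step reaches back by $N$ timesteps, the recurrence terminates after $\lfloor t/N\rfloor + 1$ terms, producing the geometric bound $e(t) \leq L\alpha \sum_{r=0}^{\lfloor t/N\rfloor} (L\beta)^r \delta_{i,j,t-rN} \leq L\alpha\,\overline{\delta}_{i,j}\,\frac{1 - (L\beta)^{\lfloor t/N\rfloor+1}}{1-L\beta}$, where the last inequality bounds each $\delta_{i,j,t-rN}$ by $\overline{\delta}_{i,j}$ and sums the geometric series (assuming $L\beta<1$, just as $L\rho(W_\text{res})<1$ is implicit in Theorem~\ref{th:1}).

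Finally I would assemble the $N$ coordinates. For $0\leq n\leq N-1$ one has $\lfloor (t-n)/N\rfloor \leq \lfloor t/N\rfloor$, and since $r\mapsto \frac{1-(L\beta)^{r+1}}{1-L\beta}$ is nondecreasing for $0\le L\beta<1$, each $e(t-n)$ is dominated by the same right-hand side with exponent $\lfloor t/N\rfloor+1$. Substituting this uniform bound into $\varepsilon_{i,j,t}^2 = \sum_{n=0}^{N-1} e(t-n)^2 \leq N\big(L\alpha\,\overline{\delta}_{i,j}\,\frac{1-(L\beta)^{\lfloor t/N\rfloor+1}}{1-L\beta}\big)^2$ and taking square roots introduces the $\sqrt{N}$ factor and gives the claimed estimate. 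The main obstacle is the bookkeeping around the floor function: one must verify that the delayed head-node values entering the norm at time $t$ never require more than $\lfloor t/N\rfloor+1$ unrollings, so that a single geometric bound uniformly covers all $N$ coordinates rather than $N$ separate estimates.
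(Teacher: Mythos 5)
Your proposal is correct and follows essentially the same route as the paper: both exploit the delay-line identity $X_n(t)=X_0(t-n)$ to obtain a lag-$N$ recurrence, unroll it into a geometric series in $\beta L$, and pick up the $\sqrt{N}$ factor from the $N$ coordinates of the state vector. The only difference is presentational — you run the recurrence on the scalar head-node difference and assemble the $\ell_2$ norm at the end (absorbing the base case into the zero initial condition), whereas the paper inducts directly on the vector norm $\varepsilon_{i,j,t}$ and bounds the remainder-time term $\varepsilon_{i,j,r}$ separately.
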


\begin{proof}
By Equation~\eqref{eq:2} and the Lipschitz continuity of $f$,
\begin{align*}
	\varepsilon_{i,j,t} &= \left\| [X_n^{(i)}(t)]_n - [X_n^{(j)}(t)]_n \right\| \\
	&= \left\| [X_0^{(i)}(t-n)]_n - [X_0^{(j)}(t-n)]_n \right\| \\
	&\leq \alpha L \left\| [u^{(i)}(t-n) - u^{(j)}(t-n)]_n \right\| + \beta L \left\| [X_{N-1}^{(i)}(t-n-1) - X_{N-1}^{(j)}(t-n-1)]_n \right\| \\
	&\leq \alpha L \overline{\delta}_{i,j} \sqrt{N} + \beta L \left\| [X_n(t-N)]_n \right\| \\
	&= \alpha L \overline{\delta}_{i,j} + \beta L \varepsilon_{i,j,t-N}.
\end{align*}
Let $r\in\{0,1,\ldots,N-1\}$ be the remainder when $t$ is divided by $N$. By induction on the inequality above,
\[ \varepsilon_{i,j,t} \leq (\beta L)^{\lfloor t/N\rfloor} \varepsilon_{i,j,r} + \alpha \overline{\delta}_{i,j} L \sum_{k=0}^{\lfloor t/N\rfloor-1} (\beta L)^k.\]
Since $r<N$, the $n^\text{th}$ reservoir node at time $r$ can be characterized by
\[ 
	X_n(r) = \begin{cases} f(\alpha u(r-n)),&\text{ if } n\leq r \\ 0,&\text{ if }n>r\end{cases},
\]
yielding $\varepsilon_{i,j,r} \leq \alpha \overline{\delta}_{i,j} L \sqrt{r+1} \leq \alpha \overline{\delta}_{i,j} L \sqrt{N}.$
Therefore 
\[ \varepsilon_{i,j,t} \leq \alpha \overline{\delta}_{i,j} L \sqrt{N} \sum_{k=0}^{\lfloor t/N\rfloor} (\beta L)^k = \alpha \overline{\delta}_{i,j}L\sqrt{N} \frac{1-(\beta L)^{\lfloor t/N\rfloor + 1}}{1 - \beta L}. \qedhere\] 
\end{proof}

%	Separation Ratio
Theorems~\ref{th:1} and~\ref{th:2} show that for input signals with small pointwise discrepancies and well-chosen reservoir parameters, their associated reservoir state norms cluster well.   However, the Theorems do not guarantee that very distinct inputs are mapped to dissimilar reservoir node state norms.  For this, we turn to the separation ratio, introduced in~\cite{goodman} and further explored in~\cite{gibbons}.  For completeness, we include it here, modified for both Algorithm~\ref{alg:Wout t} and Algorithm~\ref{alg:cluster}.  For Algorithm~\ref{alg:Wout t}, operator on the reservoir responses themselves, and for Algorithm~\ref{alg:cluster} consider the norms of the reservoir responses.

Define the center of mass of the reservoir states of the $K^\text{th}$ class in the training set at time $t$ as $M_k(t)$,
\[ M_k(t) = \begin{cases} \frac{1}{|\mathcal{C}_k|} \sum_{j\in \mathcal{C}_k} X^{(j)}(t), &\text{for Algorithm~\ref{alg:Wout t}},\\
	\frac{1}{|\mathcal{C}_k|} \sum_{j\in\mathcal{C}_k} \| X^{(j)}(t) \|, &\text{for Algorithm~\ref{alg:cluster}}.
	\end{cases}
\]
The inter-class distance is computed the same for both algorithms.  It is defined as the average distance between pairs of class means at each time step:
\[	d(t) = \frac{1}{K^2} \sum_{k=1}^K \sum_{\ell=1}^K\left\|  M_k(t) - M_\ell(t)\right\|, \]
The intra-class variance is the average variance within each class at each time step:
\[  	v(t) = \begin{cases}
	\frac{1}{K} \sum_{k=1}^K \frac{1}{|\mathcal{C}_k|} \sum_{j\in\mathcal{C}_k} \left\| M_k(t) - X^{(j)}(t)\right\|,& \text{for Algorithm~\ref{alg:Wout t}},\\
	\frac{1}{K} \sum_{k=1}^K \frac{1}{|\mathcal{C}_k|} \sum_{j\in\mathcal{C}_k} \left| M_k(t) - \|X^{(j)}(t)\|\right|,& \text{for Algorithm~\ref{alg:cluster}}.
	\end{cases}\]

%\[ M_k(t) := \frac{1}{\left| \mathcal{C}_k\right|} \sum_{j \in \mathcal{C}_k} \left\| x^{(j)}(t) \right\|, \]
%the mean of the reservoir state norms at for all signals in the $k^\text{th}$ training class at time $t$
%The inter-class distance $d$ and intra-class variance $v$ at time $t$ are
%\[	d(t) = \frac{1}{K^2} \sum_{k=1}^K \sum_{\ell=1}^K\left|  M_k(t) - M_\ell(t)\right|, \]
%and
%\[ 	w(t) = \frac{1}{K} \sum_{k=1}^K \frac{1}{\left|\mathcal{C}_k\right|} \sum_{j\in\mathcal{C}_k} \left| M_k(t) - \| x^{(j)}(t)\| \right|. \]
Then the separation ratio at time $t$ is defined as
\begin{equation}\label{eq:6}	\mathrm{Sep}(t) = \frac{d(t)}{1+w(t)}. \end{equation}
The larger $\mathrm{Sep}(t)$ is, the better the separation among the classes at time $t$.

\begin{figure}[htbp]
	\centering
	\includegraphics[scale=3]{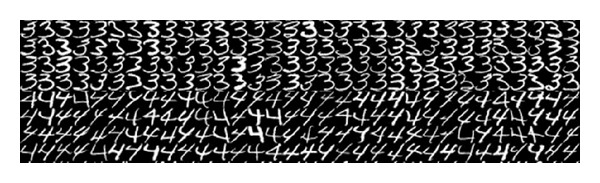}
	\caption{A subset of images from the USPS handwritten digit dataset.}
	\label{fig:2}
\end{figure}

%%%%%%%%%%%%%%%%%%%%%%%%%%%%%%%%%%%%%%%%%%%%%%%%%%%%%%%%
\section{Example}
%%%%%%%%%%%%%%%%%%%%%%%%%%%%%%%%%%%%%%%%%%%%%%%%%%%%%%%%

Handwritten digits are classified using the trained linear output weights in Algorithm~\ref{alg:Wout t} and the using the principal components method in Algorithm~\ref{alg:cluster}.  The data used are from the United States Postal Service (USPS) database, obtained from~\cite{roweis}. A sample of these images is shown in Figure~\ref{fig:2}.  Each image in the dataset is a $16\times 16$ pixel 8-bit grayscale image, reshaped as a 256 length column vector taking values in $[0,1]$.  The data are split in ten classes of 1100 images each, representing the digits 0 through 9.   For each simulation presented below, 400 images in each class are randomly selected to form the training set, while the remaining 7000 images (700 from each class) are used as the test set.  Although the nearby pixel behavior is not preserved in the horizontal direction by transforming each image into a column vector, the correlations are still present in the reservoir response due to the long short term memory property.

All experiments are implemented in MATLAB R2013a on a node with 2 Intel Xeon 5650 CPUs with 8 cores at 2.67 GHz with 8GB RAM.
%	Separation Metric
%	Generated using 'Feb11_USPS_Trials_Separation.m'
%	Section starting on line 162
%
%	New: Generated using 'June23_Main_ESN_Separation.m'
%	And 'June23_Main_ESNTDR_SeparationPlots.m'
\begin{figure}[hp]
	\centering
	\includegraphics[width=0.48\textwidth]{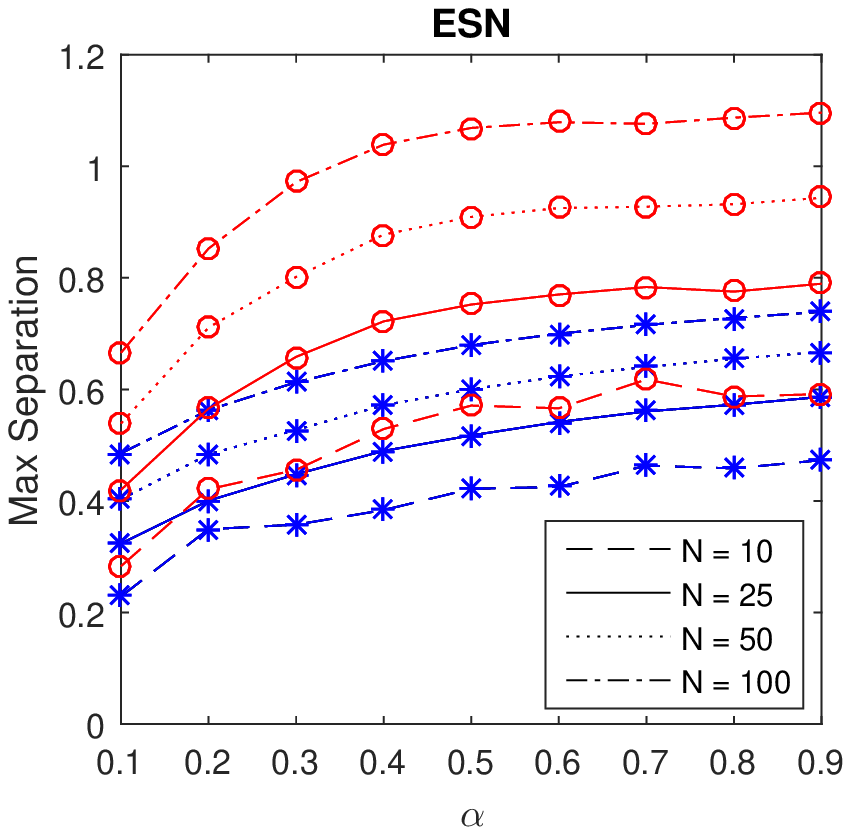}
	\hspace{.1 in}
	\includegraphics[width=0.48\textwidth]{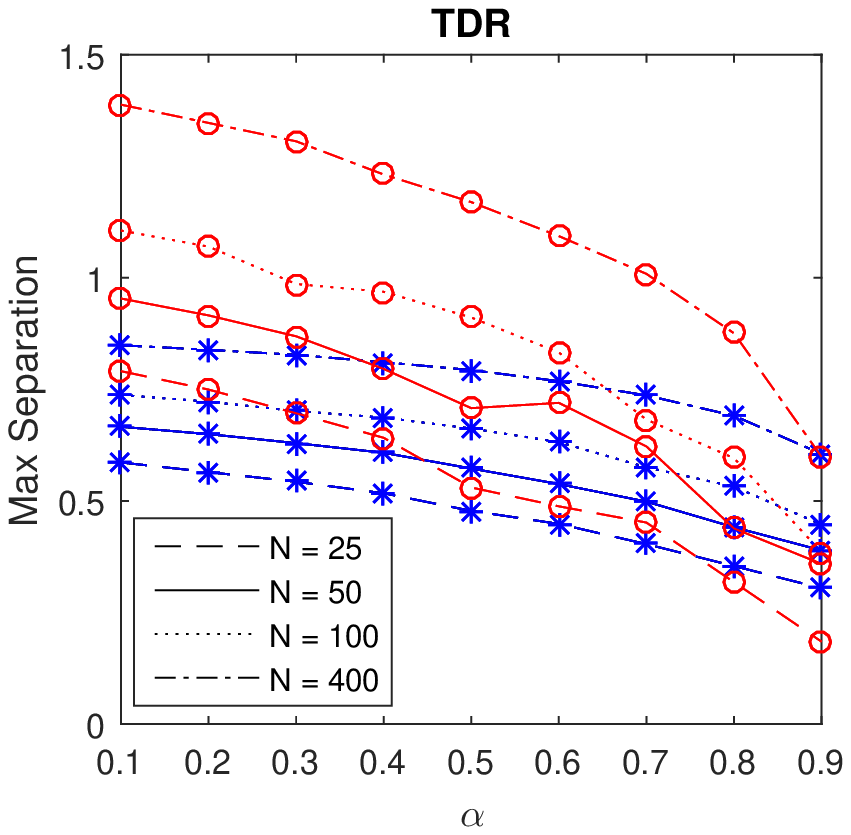}
	\caption{The maximum separation ratio from Equation~\eqref{eq:6} attained by the reservoirs on sample training sets, adapted for Algorithm~\ref{alg:Wout t} in blue `{\color{blue}$*$}' and for Algorithm~\ref{alg:cluster} in red `{\color{red}$\circ$}', for several values of $\alpha$ and $N$. Results for the ESN-style reservoirs are in the left plot, and results for TDR-style reservoirs are in the right plot.}
	\label{fig:3}
\end{figure}

\subsection{Experiment Setup}

The ESN reservoirs are set up with $N$ nodes, for $N \in\{10,25,50,100\}$.  The input weights are $W_\text{in} = \begin{bmatrix}\alpha&\alpha&\cdots&\alpha\end{bmatrix}^\top \in \mathbb{R}^N$, where $\alpha$ ranges over the set $\{0.1, 0.2, \ldots, 0.9\}$. The reservoir weights $W_\text{res}\in\mathbb{R}^{N\times N}$ are randomly chosen with $20\%$ density, and scaled so that the largest eigenvalue is $0.9999(1-\alpha)$.  No mask is used with ESN reservoirs, so $T = 256$, $\Omega = \{1,2,\ldots,256\}$, and $K=10$.

The TDR-type reservoirs are set up with $N$ nodes, for $N\in\{25,50,100,400\}$.  Again, the parameter $\alpha$ appearing in Equation~\eqref{eq:2} ranges over the set $\{0.1,0.2,\ldots,0.9\}$.  The inputs are multiplexed with a mask of length $N-1$ randomly taking values from $\{\pm 1\}$, but the reservoir is sampled only every $N-1$ time-steps.  Therefore $T = 256(N-1)$ and $\Omega = \{ r(N-1)+1: r=0,1,\ldots,255\}$ with $|\Omega|=256$.

For each simulation, 400 images from each class are randomly selected to form the training dataset, however, the same training dataset selection is used for each pair $(N,\alpha)$.  The nonlinear activation function is chosen to be $f(x) = \sin(x)$ throughout.  For the trained linear output weights, the regularization parameters $\lambda = 10^{-4}$ and $\lambda = 10^{-10}$ are used.

\begin{figure}
	\centering
	\includegraphics[width=0.45\textwidth]{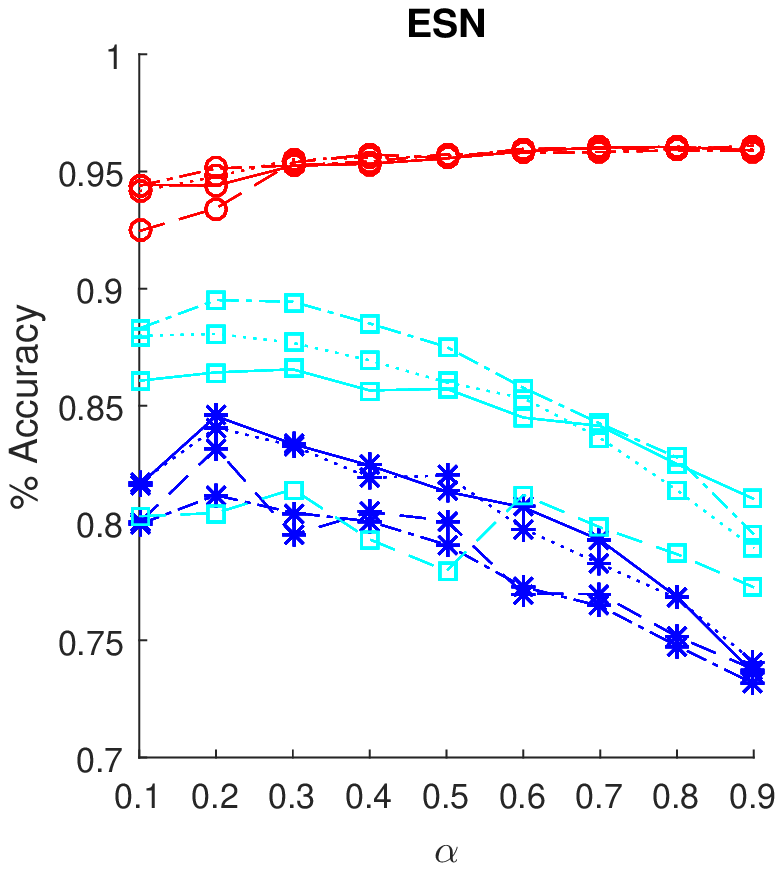}
	\includegraphics[width=0.45\textwidth]{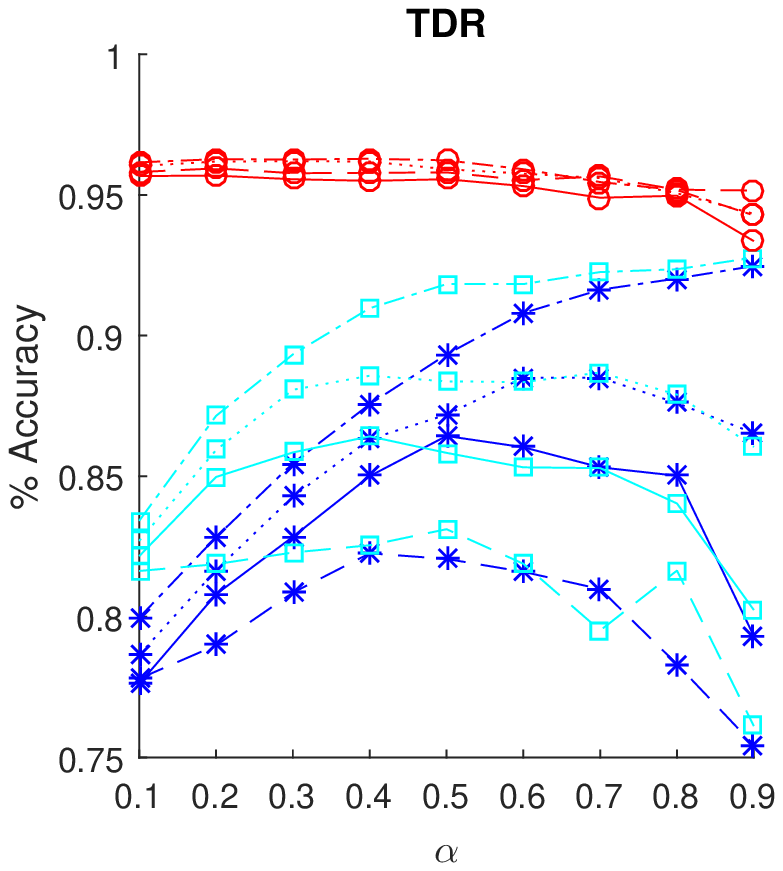}

	\includegraphics[width=0.45\textwidth]{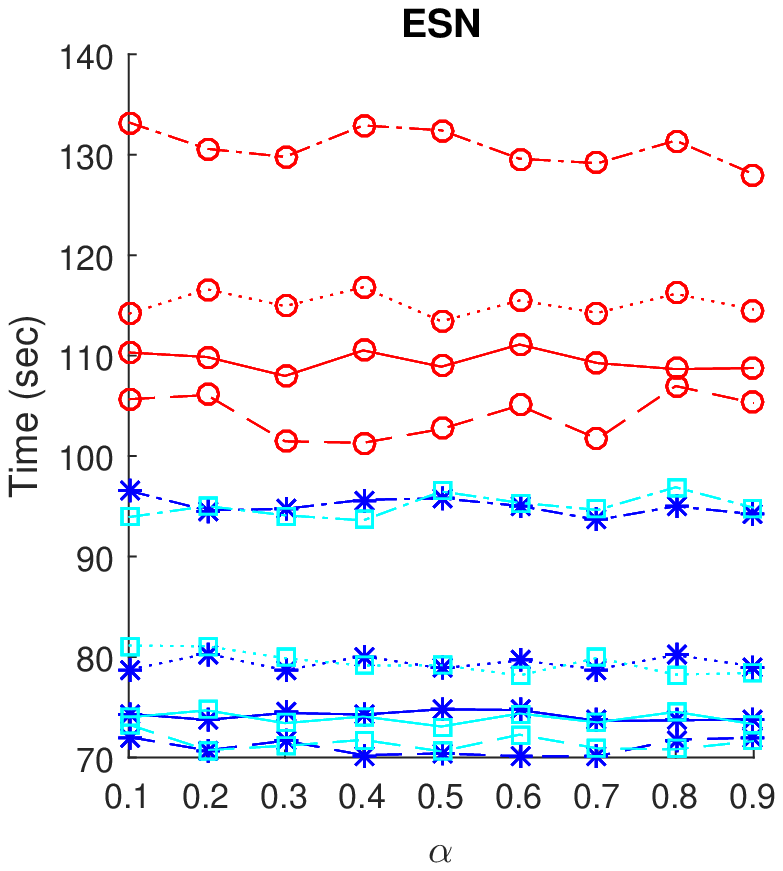}
	\includegraphics[width=0.45\textwidth]{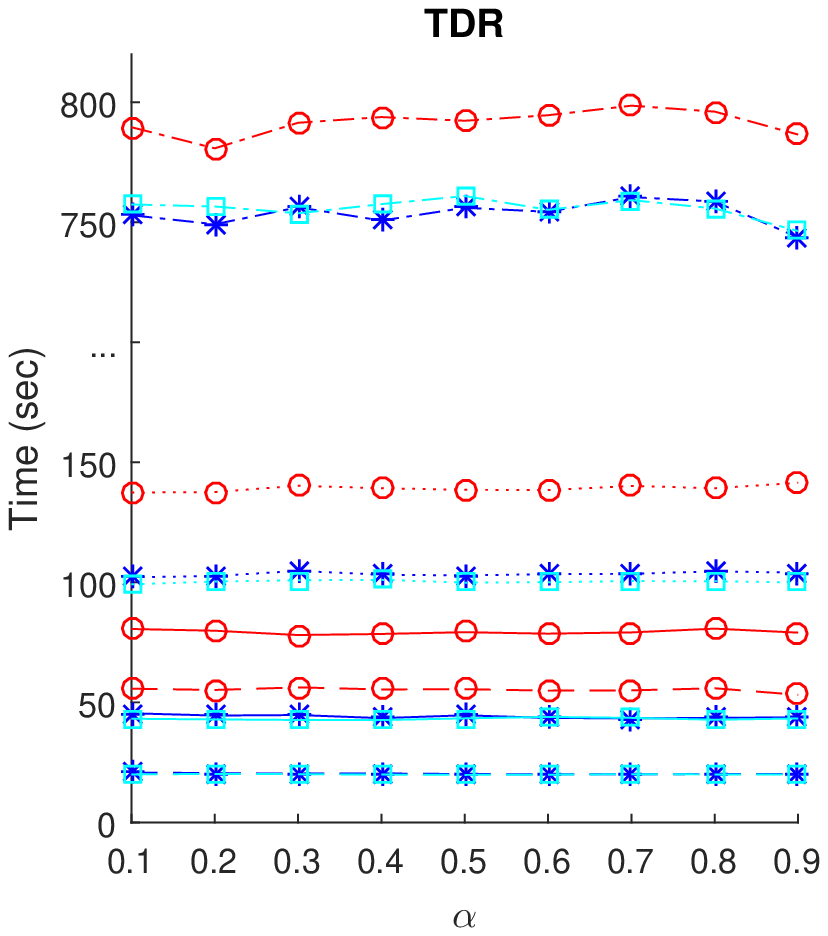}

	\includegraphics[height=1.25in]{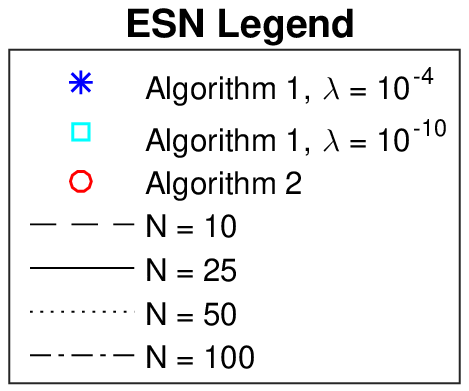}\hspace{1.5 in}
	\includegraphics[height=1.25in]{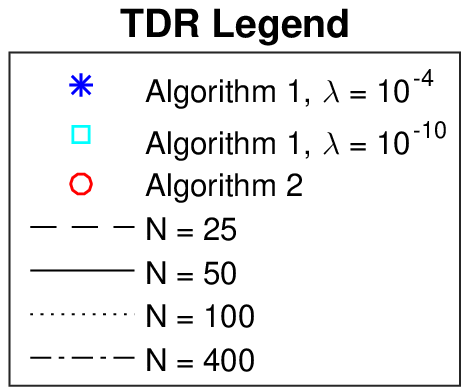}
\caption{A comparison of the performance of the proposed method using Algorithm~\ref{alg:cluster} and the method of trained linear output weights using Algorithm~\ref{alg:Wout t} on ESN and TDR reservoirs for various parameters $\alpha$ and reservoir size $N$.  The top two figures present the classification accuracy on the test set, and the bottom two figures present the total time required in seconds to classify the entire test set of 7000 images.  The plots with red '{\color{red}$\circ$}' denote results from Algorithm~\ref{alg:cluster}, and the plots with blue `{\color{blue}$*$}' or cyan `{\color{cyan}$\square$}' denote results from Algorithm~\ref{alg:Wout t} for different regularization parameters.}
%\caption{A comparison of performance of the proposed clustering method and the traditional method of training output weights.  All 7000 images of handwritten digits in the test dataset were classified using both approaches, for the ESN and the TDR reservoir topologies, for $\alpha = 0.1,\ldots,0.9$ and reservoir sizes $N = 25, 50, 100$ and $400$.  The top two figures present the classification accuracy, and the bottom two figures present the CPU time required to perform the classification step for each parameter combination.  The symbols `$\circ$' and `$*$' denote results for the clustering and the trained output weight approaches, respectively.  The different lines within each class represent the different reservoir sizes.}
\label{fig:4}
\end{figure}

\subsection{Results}
The results of these simulations are presented in Figures~\ref{fig:3}-\ref{fig:5}.

Figure~\ref{fig:3} plots the maximum separation ratio from Equation~\eqref{eq:6} attained by ESN and TDR reservoirs for both Algorithms for the selections of parameters $\alpha$ and $N$.  Notice that the reservoirs do not separate this data particularly well, but the norms of reservoir responses that are used in Algorithm~\ref{alg:cluster} tend to be slightly better separated than the vector responses used in Algorithm~\ref{alg:Wout t}.

The top two plots of Figure~\ref{fig:4} show the average classification accuracy, and the bottom two plots give the time required to classify all 7000 images in the test set using both reservoir types with both Algorithm~\ref{alg:Wout t} and Algorithm~\ref{alg:cluster} for the parameters $\alpha, N$ and $\lambda$.  The results for the clustering approach presented in Algorithm~\ref{alg:cluster} are denoted by red `{\color{red}$\circ$}'.  The results for the trained output weights using Algorithm~\ref{alg:Wout t} use blue `{\color{blue}$*$}' (for $\lambda=10^{-4}$) or cyan {\color{cyan}$\square$}' (for $\lambda=10^{-10}$).

The clustering approach always achieves a higher classification accuracy than the trained linear output weights, but takes only about 35-40 seconds longer to classify all 7000 images.    Notice the clustering approach is fairly robust to the choice of reservoir and parameters $N$ and $\alpha$.  The trained linear output weights are more sensitive to $N$ and $\alpha$, and are inversely related to the separation ratio given in Figure~\ref{fig:3}.  

The computational complexity of the two Algorithms can be seen in the `Time' plots of Figure~\ref{fig:4}.  For ESNs, both Algorithms have a quadratic dependency on $N$, but Algorithm~\ref{alg:cluster} takes a bit longer also having a quadratic dependency on $|\Omega| = T$.  For TDRs, the linear dependence on $N$ for both Algorithms is evident in the plot.  The TDR has a longer runtime than the ESN since a mask is used with the TDR, increasing $T$ by a factor of $N-1$.

The clustering approach in Algorithm~\ref{alg:cluster} was also applied to the raw input dataset without using a reservoir.  Over 100 trials, the average accuracy of the clustering method applied to the raw input data is 95.27\%, which is smaller than the average accuracy attained by Algorithm~\ref{alg:cluster} using an ESN or TDR.  This suggests that the clustering method is well-suited to this problem, but processing the data in a reservoir improves accuracy for most parameter choices since the reservoir preserves the spatial correlations well.

%	Inequalities from the Theorems
%	Generated using 'Feb11_USPS_Trials_Theorem3_2b_all.m'
%	and 'Feb11_USPS_Trials_Theorem3_1b_all.m'
\begin{figure}
	\centering
	\includegraphics[width=0.48\textwidth]{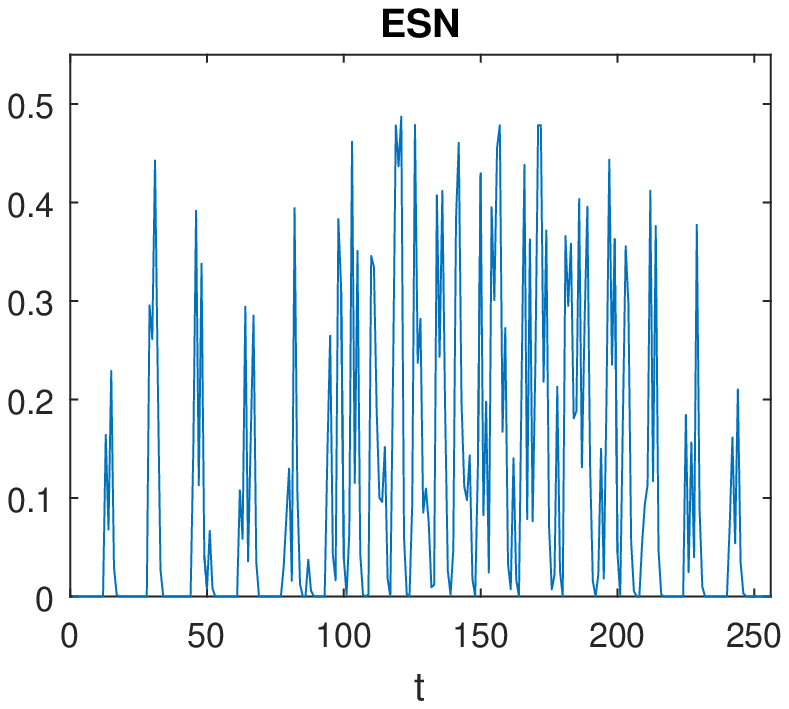}
	\includegraphics[width=0.48\textwidth]{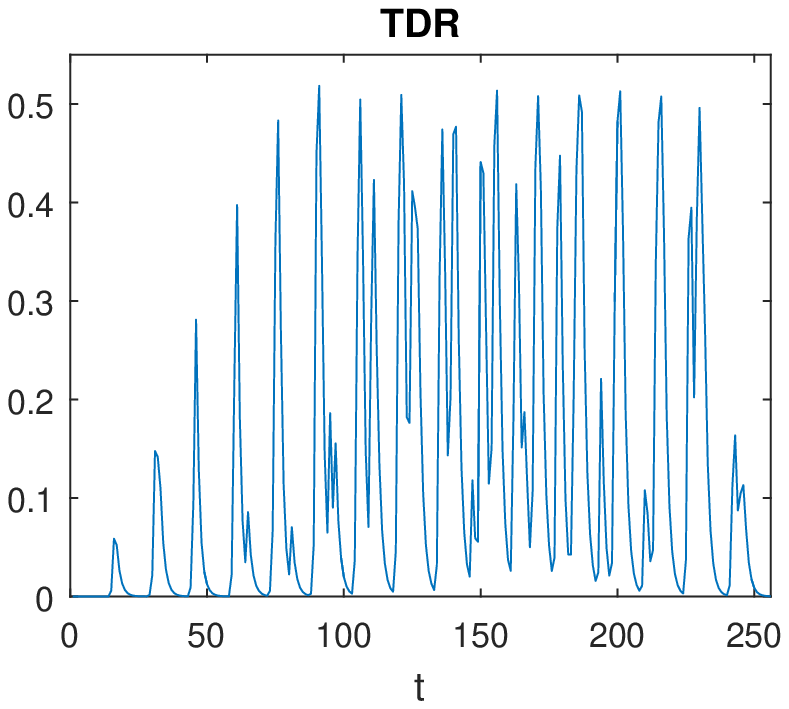}
	\caption{Ratios of the inequalities from Theorems~\ref{th:1} and~\ref{th:2} for randomly selected $i$ and $j$, plotted against values of $t$, with $N = 100$ and $\alpha = 0.5$.}
	\label{fig:5}
\end{figure}

Figure~\ref{fig:5} displays the inequalities presented in Theorems~\ref{th:1} and~\ref{th:2}, measuring the discrepancy of reservoir activations at time $t$ for similar inputs.  The two input signals were randomly selected from the class of `3s'.  The values shown in the figure are are the found by dividing out the right hand side of the inequality, giving 
\[ \varepsilon_{i,j,t} / \left( L \overline{\delta}_{i,j} \|W_\text{in}\|  \frac{1-(L\rho(W_\text{res}))^{t+1}}{1-L\rho(W_\text{res})} \right)\]
in the left image, and
\[ \varepsilon_{i,j,t} / \left( \alpha \overline{\delta}_{i,j} L \sqrt{N}\;  \frac{ 1 - \left(\beta L\right)^{\lfloor t/N\rfloor + 1}}{1 - \beta L} \right) \]
in the right image, both plotted against $t$.  The inequalities in the theorems are clearly satisfied since they are well below 1, however the upper limits could be further refined in future research.

\section{Conclusion}
This work theoretically and experimentally explored a method to classify spatiotemporal patterns using the principal components of norms of reservoir states on a training set.  The proposed method was compared to the traditional method using trained linear output weights for two types of reservoir topologies using several parameter selections.  In the numerical experiments, the proposed method achieved better classification accuracy on the test set, but took a bit longer to complete computations.  The proposed method loses some information since it considers norms of reservoir state vectors, but this leads to more robustness with respect to reservoir type and size, as well as parameter choice.

A basic implementation of both methods was used so the fundamental principles could be compared.  More sophisticated implementations could be used in future work, and may improve speed and accuracy for both methods.  These adaptations could include selecting better training sets, introducing subclasses to reduce intra-class variation and improve class separation, using optimally designed masks for TDRs~\cite{appletantoptimized}, refining the reservoir connections and weights~\cite{jaegerleaky, norton}, improving selection of parameters (spectral radius, reservoir size, feedback strength, regularization parameter) and subsequent solving of trained output weights.

\subsection*{Acknowledgements}
\noindent This research was supported by Air Force Office of Scientific Research [LRIR:15RICOR122]. \\

\noindent Any opinions, findings and conclusions or recommendations expressed in this material are those of the author and do not necessarily reflect the view of the United States Air Force.

%%%%%%%%%%%%%%%%%%%%%%%%%%%%%%%%%%%%%%%%%%
%\section*{References}
\bibliographystyle{elsarticle-num}
\bibliography{mybibfile}

\end{document}